%% file: main_tmlr.tex
\documentclass[10pt]{article} 
\usepackage[preprint]{tmlr}

\input{math_commands.tex}

\usepackage{amsmath,amssymb,amsthm,mathtools}

\usepackage{url}
\usepackage{hyperref}
\usepackage{xcolor}
\definecolor{darkblue}{rgb}{0.0,0.0,0.55}
\hypersetup{
    colorlinks=true,
    linkcolor=darkblue,
    citecolor=darkblue,
    urlcolor=darkblue
}

\theoremstyle{plain}
\newtheorem{theorem}{Theorem}[section]

\newtheorem{corollary}[theorem]{Corollary}

\theoremstyle{definition}

\newtheorem{assumption}[theorem]{Assumption}

\theoremstyle{remark}
\newtheorem{remark}[theorem]{Remark}
\newcommand\Tau{\mathcal{T}}

\usepackage{xcolor}

\definecolor{darkblue}{rgb}{0.0,0.0,0.55}

\title{Error Bounds for a Diffusion Model-Based Drift Estimator}

\author{
    \name Ioar Casado-Telletxea\thanks{This work was carried out while ICT was a visiting PhD student at the Manchester Centre for AI Fundamentals.} 
        \email icasado@bcamath.org \\
        \addr Basque Center for Applied Mathematics (BCAM) \\
        Bilbao, Spain 
    \AND
    \name Omar Rivasplata 
        \email omar.rivasplata@manchester.ac.uk \\
        \addr Centre for AI Fundamentals \& Department of Computer Science \\
        University of Manchester, UK 
    }

\def\month{MM}  
\def\year{YYYY} 
\def\openreview{\url{https://openreview.net/forum?id=XXXX}} 

\begin{document}

\maketitle

\begin{abstract}
Parameter estimation in stochastic differential equations is a classical statistical problem of much importance in many scientific fields. Recent work of \cite{costa2026drift} introduced a novel technique for estimating the drift when the diffusion parameter is known, using discrete samples from multiple trajectories.
Their method treats drift estimation as a denoising problem, and leverages tools from (conditional) score-matching diffusion models. Although their experiments showed promising results across different drift classes, the question of theoretical guarantees for their estimator was left unanswered. 
In this note, we address this gap by exploiting techniques from diffusion model theory. More concretely, we derive an explicit risk bound for the time-averaged mean-squared error of said drift estimator. Our bound decomposes the risk into the (i) Euler-Maruyama discretization, (ii) score/denoiser approximation, (iii) noise initialization, and (iv) sampling variance, revealing the trade-offs between the different hyperparameters and sources of error in the estimator. 
\end{abstract}

\tableofcontents

\section{Introduction}

Stochastic differential equations (SDEs) are mathematical models playing a fundamental role for modeling dynamical systems that are subject to deterministic trends and randomness, with important applications across physics, biology, finance, engineering, and the geosciences. Hence, reliable parameter estimation is essential both for scientific understanding and for prediction and control.
In particular, estimating the drift --that is, the deterministic part-- of an SDE from discretely observed sample paths is a classical problem in statistics \citep{kutoyants2004statistical}. 

Recently, \citet{costa2026drift} proposed a novel approach for this task. Instead of learning the drift directly via regression from the increments, which is the classical approach, they introduce a conditional denoising diffusion model for learning noisy one-step transitions and then recover the drift from this denoiser. Their approach connects drift estimation with the score-matching and denoising ideas that underlie modern diffusion models, and suggests a possible way to stabilize learning in regimes where direct regression may be difficult or unstable.

Although \citet{costa2026drift} show empirically that their method is competitive across drift classes, they do not provide a theoretical analysis of their estimator. This note complements their contribution by providing an explicit bound for the time-averaged mean-squared error of their drift estimator. The resulting decomposition separates the contribution of the Euler--Maruyama discretization, the score approximation error, noise initialization error, and the Monte Carlo estimation error. In this way, the result makes precise which parts of the method are responsible for the final estimation error, and how the main hyperparameters enter the trade-off. Remarkably, our bound captures the nonlinear relation between sampling frequency and mean-squared error observed experimentally by \citet{costa2026drift}.

The paper is structured as follows: in Section \ref{section:prelim} we introduce the prerequisites for our main results, namely, score-matching diffusion models (SDMs) and the SDM-based drift estimator of \cite{costa2026drift}. In Section \ref{section:main} we obtain error bounds for the time-averaged mean-squared error of the estimator and discuss their implications. Finally, the limitations of our approach and avenues for future work are discussed in Section \ref{section:conclusions}.

\subsection{Related work}

The statistical literature on drift estimation for diffusion processes is extensive; see \cite{kutoyants2004statistical} for general background and Section 2.1 of \cite{costa2026drift} for a recent overview. Many works formulate drift estimation as a nonparametric regression problem \citep{comte2007penalized, comte2020nonparametric, zhao2020state, denis2021ridge}, and more recent papers have considered neural-network-based regression \citep{oga2024drift, zhao2025drift}. 

In our setting, the training data are discrete observations from a finite set of i.i.d. trajectories from the SDE, observed at regular time intervals; see Subsection \ref{subsec:drift-sde} for the mathematical formulation of the problem. There is also a growing literature that aims to learn stochastic dynamics, but using uncorrelated samples from the marginals \citep{neklyudov2023action,guan2024identifying, lavenant2024toward}.

Our analysis of the diffusion-model-based estimator of \cite{costa2026drift} is informed by techniques from the convergence theory of SDMs such as \cite{chen2022sampling,mbackenote,gao2025wasserstein} or \cite{pfarr2026analyzing}. See Section 6 of \cite{tang2025score} for a survey on different approaches to diffusion model theory. We also note several recent diffusion model-based approaches that are adjacent to \cite{costa2026drift} in spirit. These include conditional diffusion methods for sampling SDE trajectories \citep{liu2025training, gao2025data}. However, these methods do not directly target drift estimation, and their convergence theory is largely unexplored.

\section{Preliminaries}\label{section:prelim}

\subsection{Introduction to Score-Matching Diffusion Models (SDMs)}
SDMs are a family of generative models that aim to generate new samples from an unknown distribution by learning to reverse a noise-injection procedure. Starting from the target distribution $X_0\sim p_0$, they gradually inject randomness until they end up with (almost) pure noise
\begin{equation}
X_0\rightarrow X_1\rightarrow...\rightarrow X_\Tau\sim p_{\Tau}\approx p_{\mathrm{noise}}.    
\end{equation}
This forward process is straightforward. The crux of diffusion models is to learn how to reverse it; that is, starting from $X_\Tau\sim p_{\mathrm{noise}}$, how to generate a sample from $p_0$ by inverting the forward process
\begin{equation}
    X_\Tau\rightarrow X_{\Tau-1}\rightarrow...\rightarrow X_0\sim p_0.  
\end{equation}
Formally, the forward process can be understood as a discretization of a (Markov) diffusion process, while the second step amounts to learning its corresponding time-reversed process. The problem can be thus expressed naturally in the language of stochastic differential equations (SDEs). The presentation below mainly follows \cite{tang2025score}.

Consider the continuous-time process $(X_\tau)_{\tau\in[0,\Tau]}$ described by the following SDE:
\begin{equation}\label{eq:forward-sde}
    dX_\tau=f(\tau,X_\tau)d\tau + g(\tau)dB_\tau,\quad X_0\sim p_0,
\end{equation}
where $f:\mathbb{R}_{+}\times\mathbb{R}^D\rightarrow \mathbb{R}^D$ and $g:\mathbb{R}_{+}\rightarrow \mathbb{R}_{+}$ are respectively the drift and noise parameters, and $(B_\tau)_{\tau\geq 0}$ is $D$-dimensional Brownian motion \citep{le2016brownian}. Note that, from a SDM perspective, $f$ and $g$ are chosen by the user: different diffusion models correspond to different choices of $f$ and $g$.

Now consider the time-reversal of the process (\ref{eq:forward-sde}), i.e., $(\widetilde{X}_\tau)_{\tau\in[0,\Tau]}:=(X_{\Tau-\tau})_{\tau\in[0,\Tau]}$. Under mild assumptions on $f$ and $g$, a theorem of \cite{anderson1982reverse} shows that $(\widetilde{X}_\tau)_{\tau\in[0,\Tau]}$ satisfies the SDE
\begin{equation}\label{eq:reverse-sde}
d\widetilde{X}_\tau=\left[-f(\Tau-\tau,\widetilde{X}_\tau) + g(\Tau-\tau)^2\nabla\log p_{\Tau-\tau}(\widetilde{X}_\tau)\right]d\tau + g(\Tau-\tau)dB_\tau,\quad \widetilde{X}_0\sim p_\Tau,
\end{equation}
where $p_\tau$ is the probability density of $X_\tau$ conditional on $X_0$.

It is worth noticing that the diffusion term in the SDE for $\widetilde{X}_\tau$ is given by the same function as the corresponding term in the SDE for $X_\tau$, namely $g$, which is evident by looking at equations (\ref{eq:reverse-sde}) and (\ref{eq:forward-sde}). By contrast, the drift term in (\ref{eq:reverse-sde}) uses not only the drift function $f$ in (\ref{eq:forward-sde}) but also additionally uses the diffusion function $g$ and the score function $\nabla\log p_\tau(X_\tau)$.

At this point, if we could sample from $p_\Tau$ and solve (\ref{eq:reverse-sde}), we would already be able to generate samples from $p_0$ by running the reverse process. However, several obstacles remain in our way.

First, except for very specific situations, $p_\Tau$ still depends on $p_0$ via $X_0$, so generally we are not able to sample from $p_\Tau$. A widely-used solution is to substitute $p_{\Tau}$ with some distribution $p_{\mathrm{noise}}$ from which it is easy to sample, and which is still close enough to $p_\Tau$.
Second, the score function, $\nabla\log p_\tau(X_\tau)$, is unknown. In fact, the core idea behind SDMs is to estimate $\nabla\log p_\tau(X_\tau)$ by training a neural network $s^\theta_\tau(x)$ using score-matching \citep{hyvarinen2005estimation}. See Section 4.2 on \cite{lai2025principles} or Section 4 in \cite{tang2025score} for a discussion of several techniques.
Finally, after these approximations, we can solve (\ref{eq:reverse-sde}) using SDE discretization schemes such as the Euler-Maruyama method \citep{kloeden1992}.

\subsubsection{Example: Variance preserving (VP) SDEs}\label{subsection:VP-SDE}

VP SDEs were introduced by \cite{songscore} as the continuous limit of denoising diffusion probabilistic models (DDPMs) \citep{ho2020denoising}. DDPMs use $N$ noise scales $\beta_1<\cdots<\beta_N$ and run the (forward) process
\begin{equation}
    x_i=\sqrt{1-\beta_i}x_{i-1} + \sqrt{\beta_i}z_i,\quad 1\leq i\leq N,
\end{equation}
where $z_i$ are i.i.d. samples from $\mathcal{N}(0,I_D)$.
It can be shown (see Section 3(c) of \cite{tang2025score} for the details) that this scheme corresponds to the continuous-time SDE
\begin{equation}\label{eq:VP-SDE-forward}
    dX_\tau=-\frac{1}{2}\beta(\tau)X_\tau d\tau + \sqrt{\beta(\tau)}dB_\tau,\quad 0\leq \tau\leq \Tau,
\end{equation}
which is a particular case of (\ref{eq:forward-sde}) with the choices $f(\tau,x)=-\frac{1}{2}\beta(\tau)x$ and $g(\tau)=\sqrt{\beta(\tau)}$. By solving (\ref{eq:VP-SDE-forward}) we get the distribution 
\begin{equation}\label{eq:forward-kernel}
    p_\tau(\cdot|X_0=x)=\mathcal{N}\left(e^{-\frac{1}{2}\int_0^\tau\beta(s)ds}x,\left(1-e^{-\int_0^\tau\beta(s)ds}\right)I_D \right).
\end{equation}
For common choices of $\beta(\tau)$ and $\Tau$ large enough, it is thus reasonable to choose $p_{\mathrm{noise}}=\mathcal{N}(0,I_D)$, which finally results in the following approximate reverse SDE:
\begin{equation}\label{eq:approx-sde-VP}
d\widetilde{X}_\tau=\left[\frac{1}{2}\beta(\Tau-\tau)\widetilde{X}_\tau + \beta(\Tau-\tau) s^{\theta}_{\Tau-\tau}(\widetilde{X}_\tau)\right]d\tau + \sqrt{\beta(\Tau-\tau)}dB_\tau,\quad \widetilde{X}_0\sim \mathcal{N}(0,I_D).
\end{equation}
While \cite{costa2026drift} used the specific schedule $\beta(\tau)=\gamma_0+\tau(\gamma_1-\gamma_0)$, for some $\gamma_1>\gamma_0>0$, the results we present here are valid for arbitrary $\beta(\tau)$.

\subsection{Estimating the drift of an SDE}\label{subsec:drift-sde}

Suppose we are interested in estimating the (time-homogeneous) drift function, $\mu:\mathbb{R}^D\rightarrow \mathbb{R}^D$, of the following continuous-time SDE:
\begin{equation}\label{eq:target-SDE}
    dY_t = \mu(Y_t)dt + \sigma dB_t,\quad t\in[0,T+\Delta],
\end{equation}
where the diffusion parameter $\sigma>0$ is known. For this purpose, we are given a dataset $\mathcal{D}_\text{obs}$ consisting of i.i.d. trajectories from (\ref{eq:target-SDE}), say $I$ of them, each trajectory with $J$ observations obtained at times $t_j := j\Delta$ for $j=0,\ldots,J$, where $\Delta=T/J$ is the frequency at which we gather observations. That is,
\begin{equation}\label{eq:training-data}
\mathcal{D}_\text{obs} = \left\{Y_{t_0}^{(i)},\ldots, Y_{t_J}^{(i)}\right\},\quad i=1,\ldots, I.  
\end{equation}

\begin{remark}
    We define the SDE (\ref{eq:target-SDE}) on $[0,T+\Delta]$ for notational convenience and to make the bound in Theorem \ref{theorem:main} cleaner. The SDE can be defined in $[0,T]$ at the cost of dividing by $T-\Delta$ and integrating up to $T-\Delta$ in the time-averaged MSE error (\ref{eq:MSE-error}).
\end{remark}

The classical approach to the drift estimation problem is regression over the increments. Let $Z_{t_j}:=Y_{t_{j}}-Y_{t_{j-1}}$, for $j=1,\ldots, J$. The solution to (\ref{eq:target-SDE}) on $[t_{j-1},t_j)$ is of the form
\begin{equation}\label{eq:SDE-interval}
    Z_{t_j}=\int_{t_{j-1}}^{t_j}\mu(Y_s)ds + \sigma\sqrt{\Delta}\omega,\quad\omega\sim\mathcal{N}(0,I_D).
\end{equation}
Observe that, due to the Markov property of $(Y_t)_{t\in[0,T]}$ and the time-homogeneity of $\mu$, the random variables $Z_{t_j}\,|\,Y_{t_{j-1}}$ are identically distributed for every $j=1,\ldots,J$.
Now, given the frequency $\Delta$, an Euler--Maruyama (EM) approximation to (\ref{eq:SDE-interval}) yields
\begin{equation}\label{eq:EM-approx}
    Z_{t_j}\approx\widehat{Z}_{t_j}:=\mu(Y_{t_{j-1}})\Delta + \sigma\sqrt{\Delta}\omega,\quad \omega\sim\mathcal{N}(0,I_D),
\end{equation}
with $\widehat{Z}_{t_{j}}\,|\,(Y_{t_{j-1}}=y)\sim \mathcal{N}\left(\mu(y)\Delta,\sigma^2\Delta I_D\right)$, and $\mathbb{E}\left[\widehat{Z}_{t_j}\,|\,Y_{t_{j-1}}=y\right]=\mu(y)\Delta$.  This suggests learning $\mu$ by regression:
\begin{equation}
    \underset{\theta}{\min}\sum_{i\in I,j\in J}\left\|D_\theta\bigl(Y^{(i)}_{t_{j-1}}\bigr)-Z_{t_j}^{(i)}\right\|^2_2,
\end{equation}
where $D_\theta$ is a fixed class of functions. However, \cite{costa2026drift} argue that this approach can suffer from high variance and the curse of dimensionality. Inspired by approaches that regularize regression problems with input noise, they propose using a conditional diffusion model to estimate the SDE drift.

\subsubsection{Drift estimation with diffusion models}
Their key observation is that, since the increments $Z_{t_j}$ can be approximated as noisy versions of the signal $\mu(Y_{t_{j-1}})\Delta$, one can treat drift estimation as a denoising problem. Concretely, \cite{costa2026drift} apply the VP SDE framework of Section \ref{subsection:VP-SDE} to $p_0$, the density of $Z_{t_j}\,|\, Y_{t_{j-1}}$, and then extract the drift from the learned denoiser. As mentioned above, the law of $Z_{t_j}\,|\, Y_{t_{j-1}}$ is the same for every $t_j$, so we just write $Z\,|\, Y$ from now on.

We now apply the VP forward process to the increments $Z$, treating $Y$ as a fixed conditioning variable. That is, $Z$ plays the role of $X_0$ in (\ref{eq:VP-SDE-forward}). For a given noise level $\tau > 0$, the forward kernel produces the noisy version
\begin{equation}\label{eq:forward-noising}
    X_\tau = \alpha_\tau Z + \sigma_\tau \varepsilon, \qquad \varepsilon \sim \mathcal{N}(0, I_D),
\end{equation}
where $\alpha_\tau = e^{-\frac{1}{2}\int_0^\tau\beta(s)\,ds}$ and $\sigma_\tau^2 = 1 - \alpha_\tau^2$ are exactly as in (\ref{eq:forward-kernel}). The conditional marginal at noise level $\tau$ given $Y = y$ is thus
\begin{equation}
    p_\tau(x_\tau \mid Y= y) = \int_{\R^D} \mathcal{N}\bigl(x_\tau;\; \alpha_\tau z,\; \sigma_\tau^2 I_D\bigr)\, p_0(z \mid y)\, dz.
\end{equation}

A neural denoiser $D_\theta(\tau, x_\tau, y)$ is then trained to predict the increment $Z$ from the noisy version $X_\tau$ and the conditioning state $Y$, by minimizing the conditional denoising loss
\begin{equation}\label{eq:denoising-loss}
    \mathcal{L}(\theta) = \mathbb{E}_{\tau, Y, Z, X_\tau}\left\|D_\theta(\tau, X_\tau, Y) - Z\right\|_2^2,
\end{equation}
where the expectation is over $\tau \sim \mathcal{U}[\varepsilon, \Tau]$, $(Y, Z)$ drawn from the pooled training pairs, and $X_\tau$ sampled via equation (\ref{eq:forward-noising}).  The population minimizer of this conditional denoising loss is $\mathbb{E}_{p_0}[Z \mid X_\tau, Y]$.
 
\begin{remark}[Connection to score-matching]
The loss (\ref{eq:denoising-loss}) is equivalent to the conditional score-matching loss up to reparametrization.  Indeed, the conditional score of the VP SDE is
\[
    \nabla_{X_\tau} \log p_\tau(X_\tau \mid Y) = -\frac{X_\tau}{\sigma_\tau^2} + \frac{\alpha_\tau}{\sigma_\tau^2}\mathbb{E}_{p_0}[Z \mid X_\tau , Y],
\]
so learning the conditional score is equivalent to learning the denoiser $\mathbb{E}_{p_0}[Z \mid X_\tau, Y]$.
\end{remark}

The trained denoiser $D_{\theta^*}(\tau, x_\tau, y) \approx \mathbb{E}_{p_0}[Z \mid X_\tau = x_\tau, Y = y]$ approximates the posterior mean of the true increments.  To obtain a closed-form relation between the denoiser and the drift $\mu(y)$, \cite{costa2026drift} exploit the EM approximation (\ref{eq:EM-approx}):
\[
    Z \mid (Y = y) \;\approx\; \hat{p}_{\mathrm{data}}(\cdot\mid y) := \mathcal{N}\bigl(\mu(y)\Delta,\; \sigma^2\Delta\, I_D\bigr).
\]
Under this Gaussian approximation, one obtains the conditional mean
\begin{equation}\label{eq:em-posterior}
    \mathbb{E}_{\hat{p}_{\mathrm{data}}}[Z \mid X_\tau = x_\tau,\, Y = y]
    = \frac{\sigma_\tau^2 \Delta}{\sigma_\tau^2 + \alpha_\tau^2\sigma^2\Delta}\left(\mu(y) + \frac{\alpha_\tau\sigma^2}{\sigma_\tau^2}\,x_\tau\right).
\end{equation}
Finally, solving equation (\ref{eq:em-posterior}) for $\mu(y)$ yields
\begin{equation}\label{eq:drift-formula}
    \mu(y) = a(\tau)\,x_\tau + b_\Delta(\tau)\,\mathbb{E}_{\hat{p}_{\mathrm{data}}}[Z \mid X_\tau = x_\tau,\, Y = y],
\end{equation}
where
\begin{equation}\label{eq:ab-coeffs}
    a(\tau) := -\frac{\alpha_\tau\sigma^2}{\sigma_\tau^2}, \qquad
    b_\Delta(\tau) := \frac{\sigma_\tau^2 + \alpha_\tau^2\sigma^2\Delta}{\sigma_\tau^2 \Delta}.
\end{equation}
Observe that the identity (\ref{eq:drift-formula}) is only exact under the EM Gaussian approximation. This distinction is central to the error analysis in Theorem \ref{theorem:main}.
 
Replacing the conditional expectation in (\ref{eq:drift-formula}) by the learned denoiser $D_{\theta^*}$, we obtain the single-sample plug-in drift estimator
\begin{equation}\label{eq:single-sample}
    \hat\mu(\tau, x_\tau, y) := a(\tau)\,x_\tau + b_\Delta(\tau)\,D_{\theta^*}(\tau, x_\tau, y).
\end{equation}
In practice, the estimator is averaged over $K$ i.i.d. samples $G^{(1)}, \ldots, G^{(K)} \sim \mathcal{N}(0,I_D)$:
\begin{equation}\label{eq:drift-estimator}
    \bar\mu_K(\tau, y) := \frac{1}{K}\sum_{k=1}^K \hat\mu\bigl(\tau,\, G^{(k)},\, y\bigr).
\end{equation}
Hence the training data is only used to learn the denoiser $D_{\theta^*}$, while at test time the estimator can, in principle, be evaluated at any state $y$ for which the denoiser is defined.
 
Our theoretical guarantees below are stated for the time-averaged MSE error 
\begin{equation}\label{eq:MSE-error}
    \frac{1}{T}\int_0^T\mathbb{E}\bigl\|\mu(Y_{t}) - \bar\mu_K(\tau, Y_{t})\bigr\|_2^2,
\end{equation}

where the expectation is taken with respect to $Y_t$ and the samples $G^{(k)}\sim \mathcal{N}(0,I_D)$ in $\bar\mu_K(\tau, y)$. This is the natural error metric for drift estimation; see \citep{comte2020nonparametric, oga2024drift, costa2026drift}.

\section{Main results}\label{section:main}

We now introduce assumptions on the SDE in (\ref{eq:target-SDE}) and the learned denoiser, $D_{\theta^*}$, that will allow us to obtain our main result, a time-averaged mean-squared error bound for the estimator (\ref{eq:drift-estimator}).

\begin{assumption}[Lipschitz drift]\label{ass:Lipschitz-drift}
    There is a constant $L>0$ such that, for every $x,y\in\mathbb{R}^D$,
    \[
    \|\mu(x)-\mu(y)\|_2\leq L\|x-y\|_2.
    \]
\end{assumption}

\begin{assumption}[Finite second moment]\label{ass:bounded-second-moment}
Let $(Y_t)_{t\in[0,T+\Delta]}$ be a solution to (\ref{eq:target-SDE}). Then for every $t\in[0,T+\Delta]$, we have
\[
\mathfrak{m}_\mu(t):= \mathbb{E}\|\mu(Y_t)\|^2_2<\infty.
\]
\end{assumption}

\begin{assumption}[$L^2$-accurate score-matching]\label{ass:score-accuracy}
    For any $\tau>0$ there is $\varepsilon_{\mathrm{score}}(\tau)>0$ such that 
    \[
    \frac{1}{T}\int_0^T\mathbb{E}\left\|D_{\theta^*}(\tau,X_\tau,Y_t)-Z\right\|^2_2dt\leq \varepsilon_{\mathrm{score}}(\tau).
    \]
\end{assumption}

\begin{assumption}[Lipschitz denoiser]\label{ass:lipschitz-denoiser}
    For any $\tau>0$, there is $L_{D}(\tau)>0$, such that for every $x,x',y\in\mathbb{R}^D$,
    \[
    \|D_{\theta^*}(\tau,x,y)-D_{\theta^*}(\tau,x',y)\|_2\leq L_{D}(\tau)\|x-x'\|_2.
    \]
\end{assumption}

Some comments on our assumptions are pertinent. Assumption \ref{ass:Lipschitz-drift} is mild in the sense that for (\ref{eq:target-SDE}) to have strong solutions, Lipschitzness of $\mu$ is the usual sufficient assumption in SDE theory. At least local Lipschitzness is necessary. See the discussion in Section 5.6 of \cite{van2007stochastic}. 

In the same way, under Assumption \ref{ass:Lipschitz-drift} and standard linear growth conditions ensuring existence and uniqueness of strong solutions to (\ref{eq:target-SDE}), Assumption \ref{ass:bounded-second-moment} follows from Gronwall's inequality \citep{oksendal2003stochastic}. Observe that Assumptions \ref{ass:Lipschitz-drift} and \ref{ass:bounded-second-moment} together imply $\int_0^{T+\Delta}\mathfrak{m}_\mu(t)dt<\infty.$

Assumption \ref{ass:score-accuracy} imposes oracle denoiser accuracy, which is standard in diffusion model theory \citep{gao2025wasserstein, tang2025score}. On the one hand, this avoids the difficult question of obtaining approximation rates for score-matching under specific architectures. At the same time, it also serves as a plug-in assumption that can accommodate any of those rates, depending on the specific architecture chosen by the user. 

Finally, Assumption \ref{ass:lipschitz-denoiser} holds for many standard neural network architectures, such as any finite MLP with globally Lipschitz activations and fixed weights. In particular, it is satisfied by the architecture described in Appendix B of \cite{costa2026drift}, since the $x$-dependence enters the denoiser through the convolutional and conditioning parts, which are both a composition of linear layers and Lipschitz activations. However, the resulting constant $L_D(\tau)$ is not explicitly controlled during training, and in practice may depend on the dimension $D$. Regularity assumptions such as \ref{ass:lipschitz-denoiser} are also common when analyzing diffusion models \citep{chen2022sampling, gao2025wasserstein, pfarr2026analyzing}.

We are now ready to state our main result, which decomposes the average MSE of the drift estimator into four sources of error.

\subsection{The error bound}

\begin{theorem}[Main Theorem]\label{theorem:main} 
Let us have a training dataset $\mathcal{D}_{\text{obs}}$ as in (\ref{eq:training-data}) for certain frequency $\Delta>0$. 
Fix the noise level $\tau>0$ and the number of noisy samples $K$. Under assumptions \ref{ass:Lipschitz-drift}, \ref{ass:bounded-second-moment}, \ref{ass:score-accuracy}, and \ref{ass:lipschitz-denoiser}, we have
\begin{equation}
\begin{aligned}
\frac{1}{T}\int_0^{T}\E\bigl\|\mu(Y_t)-\bar\mu_K(\tau,Y_t)\bigr\|_2^2dt
\leq 4\bigg[
&\underbrace{L^2\left(\frac{\Delta^2}{T}\int_0^{T+\Delta}\mathfrak{m}_\mu(s)ds+\sigma^2 D\Delta\right)}_{\text{(I) discretization error}}
+\underbrace{b_\Delta(\tau)^2\varepsilon_{\mathrm{score}}(\tau)}_{\text{(II) score approximation error}}\\
&+\underbrace{L_{\hat\mu}(\tau)^2
\left(2\alpha_\tau^2\left(\frac{\Delta^2}{T}\int_0^{T+\Delta}\mathfrak{m}_\mu(s)ds+\sigma^2 D\Delta\right)
+(1-\sigma_\tau)^2D
\right)}_{\text{(III) noise initialization error}}\\
&+\underbrace{\frac{D}{K}L_{\hat\mu}(\tau)^2\bigg],}_{\text{(IV) Monte Carlo sampling error}}
\end{aligned}
\end{equation}
where $L_{\hat\mu}(\tau):=|a(\tau)|+b_\Delta(\tau)L_D(\tau).$
\end{theorem}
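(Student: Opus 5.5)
The plan is to insert intermediate quantities between $\mu(Y_t)$ and $\bar\mu_K(\tau,Y_t)$ and split the error into four terms. The factor $4$ comes from $\|a_1+a_2+a_3+a_4\|^2\le 4\sum_i\|a_i\|^2$. Fix $t$ and write $Y=Y_t$ and $Z=Y_{t+\Delta}-Y_t$; the horizon $T+\Delta$ makes this increment available for all $t\in[0,T]$. Let $\widehat Z=\mu(Y)\Delta+\sigma\sqrt{\Delta}\,\omega$ be the EM increment, and $X_\tau=\alpha_\tau Z+\sigma_\tau\varepsilon$ the forward-noised true increment. We use the identity (\ref{eq:drift-formula}) in pointwise form. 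For the EM model, define $\widehat X_\tau=\alpha_\tau\widehat Z+\sigma_\tau\varepsilon$. Then $a(\tau)\widehat X_\tau+b_\Delta(\tau)\,\E_{\hat p_{\mathrm{data}}}[\widehat Z\mid\widehat X_\tau,Y]=\mu(Y)$ holds exactly. The decomposition is
\[
\mu(Y)-\bar\mu_K
=\bigl[\mu(Y)-\hat\mu(\tau,\widehat X_\tau,Y)\bigr]+\bigl[\hat\mu(\tau,\widehat X_\tau,Y)-\hat\mu(\tau,G,Y)\bigr]+\bigl[\hat\mu(\tau,G,Y)-\E_G\hat\mu(\tau,G,Y)\bigr]+\bigl[\E_G\hat\mu-\bar\mu_K\bigr].
\]
I expect to adjust the bookkeeping slightly so that the four pieces map onto (I)–(IV). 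In particular, the first piece should be split as $b_\Delta(\tau)(\E[\widehat Z\mid\cdot]-D_{\theta^*})$, which gives (II) via Assumption \ref{ass:score-accuracy}, plus a discretization remainder, which gives (I).

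\textbf{Discretization (I).} Write $Z-\widehat Z=\int_t^{t+\Delta}(\mu(Y_s)-\mu(Y_t))\,ds$ after coupling the Brownian increments. With Assumption \ref{ass:Lipschitz-drift}, Cauchy–Schwarz, and $\E\|Y_s-Y_t\|^2\le 2(s-t)\int_t^s\mathfrak m_\mu+2\sigma^2D(s-t)$, this yields a bound of the form $L^2(\Delta^2\cdot\text{avg}\int\mathfrak m_\mu+\sigma^2D\Delta)$. Averaging over $t\in[0,T]$ and exchanging integrals via Fubini gives the factor $\frac{\Delta^2}{T}\int_0^{T+\Delta}\mathfrak m_\mu$, since each $s$ is covered by an interval of length at most $\Delta$. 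This same quantity $\mathcal E_{\mathrm{disc}}:=\frac{\Delta^2}{T}\int_0^{T+\Delta}\mathfrak m_\mu+\sigma^2D\Delta$ bounds $\frac1T\int\E\|Z-\widehat Z\|^2$ up to constants. It will be reused below.

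\textbf{Score term (II).} On the true data, replacing the exact conditional mean by $D_{\theta^*}$ costs $b_\Delta(\tau)^2\E\|D_{\theta^*}(\tau,X_\tau,Y)-Z\|^2$. After time-averaging this is at most $b_\Delta(\tau)^2\varepsilon_{\mathrm{score}}(\tau)$ by Assumption \ref{ass:score-accuracy}. Note that the assumption compares directly to $Z$, which absorbs the conditional-mean step.

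\textbf{Noise initialization (III) and Monte Carlo (IV).} By Assumption \ref{ass:lipschitz-denoiser}, $x\mapsto\hat\mu(\tau,x,y)$ is $L_{\hat\mu}(\tau)$-Lipschitz. Couple $G=\varepsilon$. Then $\|X_\tau-G\|\le\alpha_\tau\|Z\|+(1-\sigma_\tau)\|\varepsilon\|$. Relative to the clean $\alpha_\tau\mu\Delta$ part, the $Z$ contribution is handled through $Z-\widehat Z$ and $\widehat Z$ fluctuations, giving $2\alpha_\tau^2\mathcal E_{\mathrm{disc}}$. The $\varepsilon$ contribution is $(1-\sigma_\tau)^2D$. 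The Monte Carlo term is the variance of an average of $K$ i.i.d. copies. The Gaussian Poincaré inequality applied to an $L_{\hat\mu}$-Lipschitz map of $G\sim\mathcal N(0,I_D)$ gives $\mathrm{Var}\le L_{\hat\mu}^2D$, hence $\frac DK L_{\hat\mu}^2$.

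The main obstacle is the initialization term. It has to be arranged so that the residual involving $\alpha_\tau\mu(Y)\Delta$ really reduces to discretization quantities and does not leave an uncontrolled $\alpha_\tau^2\Delta^2\mathfrak m_\mu$ term. Concretely, I would choose the centering point in the decomposition so that the term containing $\alpha_\tau\mu(Y)\Delta$ cancels. Failing that, I would bound it by $\Delta^2\mathfrak m_\mu$, which is already of the form in $\mathcal E_{\mathrm{disc}}$.
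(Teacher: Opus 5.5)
Your plan has a genuine gap, and it is structural rather than a matter of bookkeeping.

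You decompose along single realisations, $\mu(Y)\to\hat\mu(\tau,\widehat X_\tau,Y)\to\hat\mu(\tau,G,Y)\to\E_G\hat\mu\to\bar\mu_K$. Your third piece $\hat\mu(\tau,G,Y)-\E_G\hat\mu(\tau,G,Y)$ is therefore a one-sample fluctuation, with second moment $\E\,\Var_G\bigl(\hat\mu(\tau,G,Y)\bigr)$. You never bound it. The natural Poincar\'e bound gives $D\,L_{\hat\mu}(\tau)^2$ with no factor $1/K$, a term that appears nowhere in (I)--(IV) and does not vanish as $K\to\infty$. The cure is not to split off a single-sample fluctuation at all, but to average over the noise before decomposing (Jensen). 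This is what the paper does. Its intermediates $\mu_\Delta(y)=\frac1\Delta\E[Y_{t+\Delta}-Y_t\mid Y_t=y]$, $\widetilde\mu(\tau,y)=\E[\hat\mu(\tau,X_\tau,y)\mid Y=y]$ and $\widetilde\mu_{\mathcal N}(\tau,y)=\E_G\hat\mu(\tau,G,y)$ are deterministic given $y$. So only the last piece $\widetilde\mu_{\mathcal N}-\bar\mu_K$ is random in $G$, and it carries the $1/K$. The coupling $G=\varepsilon$ is used only inside an expectation (Kantorovich--Rubinstein), where it costs $L_{\hat\mu}(\tau)^2W_2^2$ and nothing more.

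The second problem is the EM coupling. By the exact Gaussian identity, your first piece equals $b_\Delta(\tau)\bigl(\E_{\hat p_{\mathrm{data}}}[\widehat Z\mid\widehat X_\tau,Y]-D_{\theta^*}(\tau,\widehat X_\tau,Y)\bigr)$. There is no separate ``discretization remainder'' inside it. Moreover, the denoiser is evaluated at the EM input $\widehat X_\tau$, whereas Assumption~\ref{ass:score-accuracy} controls $D_{\theta^*}$ only at the true noised increment $X_\tau$. Moving from $(\widehat Z,\widehat X_\tau)$ to $(Z,X_\tau)$ via the $L^2$ projection and Assumption~\ref{ass:lipschitz-denoiser} costs a term like $b_\Delta(\tau)^2(1+\alpha_\tau L_D(\tau))^2\,\E\|Z-\widehat Z\|^2$. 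That is a discretization error carrying $b_\Delta$ and $L_D$ factors, plus extra triangle-inequality pieces. So even a repaired version of your route gives a differently weighted bound with a larger constant, not the stated inequality with (I) and the factor $4$.

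The idea you are missing is that the EM model is not needed in the proof at all. Since $a(\tau)\alpha_\tau+b_\Delta(\tau)=1/\Delta$, the oracle $a(\tau)X_\tau+b_\Delta(\tau)\E[Z\mid X_\tau,Y]$ has conditional mean $(a(\tau)\alpha_\tau+b_\Delta(\tau))\E[Z\mid Y=y]=\mu_\Delta(y)$ under the \emph{true} increment law. With this, (I) becomes the pure drift comparison $\mu-\mu_\Delta$, with coefficient $L^2$, handled exactly as you sketch. Term (II) becomes Jensen plus $L^2$-optimality on true data, matching Assumption~\ref{ass:score-accuracy} directly.

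Finally, the obstacle you flag in (III) is not one: an $\alpha_\tau^2\Delta^2\mathfrak m_\mu$ term is already of the allowed form. Use the independence of $Z$ and $\varepsilon$, which kills the cross term, rather than the triangle inequality, which would double the $(1-\sigma_\tau)^2D$ contribution.
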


\begin{proof}

The basic idea, common in the diffusion model literature, is to decompose the error $\|\mu(y)-\bar{\mu}_K(\tau,y)\|_2^2$ into different sources and then bound each term separately. We first introduce the key intermediate quantities.

For any $y\in\R^D$, we define the conditional mean of the single-sample estimator
\begin{equation}\label{eq:mu-tilde}
\widetilde\mu(\tau,y) := \E_{X_\tau\mid y}\bigl[\hat\mu(\tau,X_\tau,y)\bigr].
\end{equation}
We also introduce the average single-sample estimator for the case where $X_\tau\sim \mathcal{N}(0,I_D)$. This will capture the noise initialization error: 

\begin{equation}\label{eq:mu-tilde-normal}
\widetilde\mu_\mathcal{N}(\tau,y) := \E_{X_\tau\sim \mathcal{N}(0,I_D)}\bigl[\hat\mu(\tau,X_\tau,y)\bigr].
\end{equation}

Finally, we define 
\begin{equation}\label{eq:mu-Delta}
\mu_\Delta(y) := \frac{1}{\Delta}\,\E\bigl[Y_{t+\Delta}-Y_{t}\;\big|\;Y_{t}=y\bigr]
= \frac{
1}{\Delta}\int_0^\Delta \E\bigl[\mu(Y_{t+h})\;\big|\;Y_{t}=y\bigr]\,dh.
\end{equation}
This is the time-averaged expected drift over one step starting from $y$: it only depends on the SDE and the observation frequency $\Delta$. Note that the second equality arises by solving the SDE in $[t,t+\Delta)$.

For each $y\in\mathbb{R}^D$, consider the following decomposition of the error term:
\begin{equation}\label{eq:decomposition}
\mu(y)-\bar\mu_K(\tau,y)
= \underbrace{\bigl[\mu(y)-\mu_\Delta(y)\bigr]}_{\text{EM discretization}}
+ \underbrace{\bigl[\mu_\Delta(y)-\widetilde\mu(\tau,y)\bigr]}_{\text{score approximation}}
+ \underbrace{\bigl[\widetilde\mu(\tau,y)-\widetilde\mu_\mathcal{N}(\tau,y)\bigr]}_{\text{noise initialization}}
+ \underbrace{\bigl[\widetilde\mu_\mathcal{N}(\tau,y)-\bar\mu_K(\tau,y)\bigr]}_{\text{sampling}}.
\end{equation}

We now proceed to control each term separately.
 
\bigskip

\noindent\textbf{Term (I): EM discretization error.}\;
By definition:
\begin{equation}
\mu(Y_t)-\mu_\Delta(Y_t) = \frac{1}{\Delta}\int_0^\Delta\Bigl(\mu(Y_t)-\E\bigl[\mu(Y_{t+h})\;\big|\;Y_{t}\bigr]\Bigr)\,dh.
\end{equation}
Taking the squared norm, applying Jensen's inequality and taking expectations, we have
\begin{equation}\label{eq:lip-step}
\begin{aligned}
\E\bigl\|\mu(Y_t)-\mu_\Delta(Y_t)\bigr\|_2^2
&\leq \frac{1}{\Delta}\int_0^\Delta\E\left[\left\|\mu(Y_{t})-\mu(Y_{t+h})\right\|_2^2\right]\,dh\\
&\leq  \frac{L^2}{\Delta}\int_0^\Delta\E\left[\|Y_{t+h}-Y_{t}\|_2^2\right]\,dh.
\end{aligned}
\end{equation}

The first inequality is due to Fubini and the tower property of conditional expectations, while in the second we simply applied Assumption \ref{ass:Lipschitz-drift}.

Furthermore, for $h\in[0,\Delta]$, we have
\begin{equation}\label{eq:increment}
Y_{t+h}-Y_{t} = \int_{t}^{t+h}\mu(Y_s)\,ds + \sigma(B_{t+h}-B_{t}),
\end{equation}
which is the solution to the SDE on $[t,t+h)$. This implies
\begin{equation}\label{eq:increment-bound}
\begin{aligned}
\E\bigl[\|Y_{t+h}-Y_{t}\|_2^2\bigr]
&\leq 2\,\E\left[\Bigl\|\int_{t}^{t+h}\mu(Y_s)\,ds\Bigr\|_2^2\right] + 2\sigma^2\,\E\|B_{t+h}-B_{t}\|_2^2 \\
&\leq 2h\int_{t}^{t+h}\E\bigl[\|\mu(Y_s)\|_2^2\bigr]\,ds + 2\sigma^2 Dh \\
&= 2h\int_t^{t+h}\mathfrak{m}_\mu(s)ds + 2\sigma^2 Dh.
\end{aligned}
\end{equation}
Observe that in the first inequality we used $(a+b)^2\leq 2a^2+2b^2$. 
For the second inequality we used the Cauchy--Schwarz inequality for the inner product $\langle u,v\rangle:=\int_{t}^{t+h}u(s)\cdot v(s)\,ds$ with $u(s)=1$ and $v(s)=\mu(Y_s)$, along with the fact that $B_{t+h}-B_{t}$ is a centered Gaussian with variance $h$. For the last bound we simply invoke Assumption \ref{ass:bounded-second-moment} to guarantee that the integral is finite.
 
Finally, substituting (\ref{eq:increment-bound}) back into~(\ref{eq:lip-step}) and integrating:
\begin{equation}\label{eq:term-I-pointwise}
\begin{aligned}
\frac{1}{T}\int_0^{T}\E\bigl\|\mu(Y_t)-\mu_\Delta(Y_t)\bigr\|_2^2dt
&\leq \frac{L^2}{\Delta T}\int_0^{T}\int_0^\Delta \left(2h\int_t^{t+h}\mathfrak{m}_\mu(s)ds + 2\sigma^2 Dh\right)\,dh\,dt\\
&\leq L^2\left(\frac{\Delta^2}{T}\int_0^{T+\Delta}\mathfrak{m}_\mu(s)ds+\sigma^2 D\Delta\right),
\end{aligned}
\end{equation}
where we used Fubini's theorem to swap the integrals.
\bigskip

\noindent\textbf{Term (II): score approximation bias.}\;
Recall that $X_\tau = \alpha_\tau Z + \sigma_\tau \varepsilon$, with $\varepsilon \sim \mathcal \mathcal{N}(0,I_D)$. We introduce the \emph{oracle estimator}
\begin{equation}\label{eq:oracle-estimator}
\widehat\mu^{\mathrm{orc}}(\tau,x,y)
:= a(\tau)x + b_\Delta(\tau)\,\E_{p_0}[Z\mid X_\tau=x,\,Y=y],
\end{equation}
which replaces the learned denoiser $D_{\theta^*}$ by the true conditional expectation. A direct computation using the identity $a(\tau)\alpha_\tau+b_\Delta(\tau)=1/\Delta$ yields
\begin{equation}\label{eq:oracle-identity-term-II}
\mu_\Delta(y)
=
\E\!\left[\widehat\mu^{\mathrm{orc}}(\tau,X_\tau,y)\,\big|\,Y=y\right].
\end{equation}

Using (\ref{eq:oracle-identity-term-II}), the gap between $\mu_\Delta$ and $\widetilde\mu$ becomes a conditional expectation of the denoiser error:
\begin{align}
\mu_\Delta(y)-\widetilde\mu(\tau,y)
&=
\E\left[\widehat\mu^{\mathrm{orc}}(\tau,X_\tau,y)-\widehat\mu(\tau,X_\tau,y)\,\big|\,Y=y\right] \notag \\
&=b_\Delta(\tau)\,\E\!\left[\E_{p_0}[Z\mid X_\tau,Y]-D_{\theta^*}(\tau,X_\tau,Y)\,\big|\,Y=y\right].
\label{eq:term-II-gap}
\end{align}
Taking squared norms, applying Jensen's inequality, and then taking expectation with respect to $Y$ gives
\begin{equation}\label{eq:term-II-after-jensen}
\E\|\mu_\Delta(Y)-\widetilde\mu(\tau,Y)\|_2^2
\le
b_\Delta(\tau)^2\,\E\!\left\|\E_{p_0}[Z\mid X_\tau,Y]-D_{\theta^*}(\tau,X_\tau,Y)\right\|_2^2.
\end{equation}
Since $\E_{p_0}[Z\mid X_\tau,Y]$ is the $L^2$-optimal predictor of $Z$ among all $\sigma(X_\tau,Y)$-measurable functions, we have
\begin{equation}\label{eq:term-II-optimality}
\E\!\left\|\E_{p_0}[Z\mid X_\tau,Y]-D_{\theta^*}(\tau,X_\tau,Y)\right\|_2^2
\le
\E\|Z-D_{\theta^*}(\tau,X_\tau,Y)\|_2^2.
\end{equation}
Combining (\ref{eq:term-II-after-jensen}) and (\ref{eq:term-II-optimality}) with Assumption~\ref{ass:score-accuracy} yields
\begin{equation}\label{eq:term-II-final}
\frac{1}{T}\int_0^T\E\|\mu_\Delta(Y_t)-\widetilde\mu(\tau,Y_t)\|_2^2dt
\le
b_\Delta(\tau)^2\,\varepsilon_{\mathrm{score}}(\tau).
\end{equation}

\bigskip

\noindent\textbf{Term (III): Noise initialization error.}\;
Recall the definitions
\begin{equation}\label{eq:term-III-definitions}
\widetilde\mu(\tau,y):=\E_{X_\tau\sim p_\tau(\cdot\mid y)}[\widehat\mu(\tau,X_\tau,y)],
\qquad
\widetilde\mu_\mathcal{N}(\tau,y):=\E_{G\sim \mathcal{N}(0,I_D)}[\widehat\mu(\tau,G,y)],
\end{equation}
where $\widehat\mu(\tau,x,y)=a(\tau)x+b_\Delta(\tau)D_{\theta^*}(\tau,x,y)$.

By Assumption~\ref{ass:lipschitz-denoiser}, the map $\widehat\mu(\tau,\cdot,y)$ is Lipschitz with constant $L_{\hat\mu}(\tau):=|a(\tau)|+b_\Delta(\tau)L_D(\tau)$. The Kantorovich--Rubinstein duality (Theorem 5.10 in \cite{villani2009optimal}) combined with the standard bound $W_1\le W_2$ then gives
\begin{equation}\label{eq:term-III-kr}
\|\widetilde\mu(\tau,y)-\widetilde\mu_\mathcal{N}(\tau,y)\|_2
\le
L_{\hat\mu}(\tau)\,W_2\!\left(p_\tau(\cdot\mid y),\mathcal{N}(0,I_D)\right),
\end{equation}

where $W_p$ is the $p-$Wasserstein distance. To control the $2-$Wasserstein distance on the right-hand side, we couple $X_\tau\sim p_\tau(\cdot\mid y)$ and $G\sim \mathcal{N}(0,I_D)$ by writing $X_\tau=\alpha_\tau Z+\sigma_\tau\varepsilon$ and $G=\varepsilon$ with $\varepsilon\sim \mathcal{N}(0,I_D)$, yielding
\begin{align}
W_2^2\!\left(p_\tau(\cdot\mid y),\mathcal{N}(0,I_D)\right)
&\le
\E\!\left[\|X_\tau-G\|_2^2\mid Y=y\right] \notag\\
&=
\alpha_\tau^2\,\E[\|Z\|_2^2\mid Y=y]+(1-\sigma_\tau)^2D.
\label{eq:term-III-coupling}
\end{align}
Combining equations (\ref{eq:term-III-kr}) and (\ref{eq:term-III-coupling}), squaring, and taking outer expectation gives
\begin{equation}\label{eq:term-III-pointwise}
\E\|\widetilde\mu(\tau,Y_t)-\widetilde\mu_N(\tau,Y_t)\|_2^2
\le
L_{\hat\mu}(\tau)^2\!\left(\alpha_\tau^2\,\E\|Z\|_2^2+(1-\sigma_\tau)^2D\right).
\end{equation}
Applying the increment bound (\ref{eq:increment-bound}) to control $\E\|Z\|_2^2$, and averaging over $t\in[0,T]$, we arrive at
\begin{equation}\label{eq:term-III-final}
\frac{1}{T}\int_0^{T}\E\|\widetilde\mu(\tau,Y_t)-\widetilde\mu_N(\tau,Y_t)\|_2^2\,dt
\le
L_{\hat\mu}(\tau)^2\!\left[2\alpha_\tau^2\!\left(\frac{\Delta^2}{T}\int_0^{T+\Delta}\mathfrak{m}_\mu(s)\,ds+\sigma^2 D\Delta\right)+(1-\sigma_\tau)^2D\right].
\end{equation}

\bigskip

\noindent\textbf{Term (IV): Monte Carlo error.}\;
Recall that
\begin{equation}\label{eq:term-IV-definitions}
\bar\mu_K(\tau,y)=\frac{1}{K}\sum_{k=1}^K\widehat\mu(\tau,G^{(k)},y),
\end{equation}
with $G^{(1)},\dots,G^{(K)}\overset{\mathrm{i.i.d.}}{\sim}\mathcal{N}(0,I_D)$.

Conditionally on $Y=y$, the random vectors $\widehat\mu(\tau,G^{(1)},y),\dots,\widehat\mu(\tau,G^{(K)},y)$ are i.i.d.\ with mean $\widetilde\mu_\mathcal{N}(\tau,y)$, so
\begin{equation}\label{eq:term-IV-variance}
\E_G\left[\|\widetilde\mu_\mathcal{N}(\tau,y)-\bar\mu_K(\tau,y)\|_2^2\right]
=
\frac{1}{K}\,\Var_G\!\bigl(\widehat\mu(\tau,G,y)\bigr).
\end{equation}
Since $\widehat\mu(\tau,\cdot,y)$ is $L_{\hat\mu}(\tau)$-Lipschitz, applying the Gaussian Poincar\'e inequality \citep{boucheron2013} componentwise yields
\begin{equation}\label{eq:term-IV-poincare}
\Var_G\!\bigl(\widehat\mu(\tau,G,y)\bigr)
\le
\E_G\|J_G\widehat\mu(\tau,G,y)\|_F^2
\le
D\,L_{\hat\mu}(\tau)^2,
\end{equation}
where $J_G\widehat\mu$ denotes the Jacobian with respect to $G$. Combining (\ref{eq:term-IV-variance}) and (\ref{eq:term-IV-poincare}) gives
\begin{equation}\label{eq:term-IV-final}
\E_G\!\left[\|\widetilde\mu_\mathcal{N}(\tau,y)-\bar\mu_K(\tau,y)\|_2^2\right]
\le
\frac{D}{K}\,L_{\hat\mu}(\tau)^2
=
\frac{D}{K}\bigl(|a(\tau)|+b_\Delta(\tau)L_D(\tau)\bigr)^2.
\end{equation}

The proof concludes by applying the inequality $(a+b+c+d)^2\le 4a^2+4b^2+4c^2+4d^2$ to the decomposition of the error and combining all the bounds.

\end{proof}

\subsection{Interpreting the bound}

Theorem \ref{theorem:main} makes explicit how the different hyperparameters in the estimator (\ref{eq:drift-estimator}) interact. We now describe each term separately and discuss their interactions.
\begin{itemize}
    \item \textbf{Term (I): discretization error.}
    This is the Euler--Maruyama bias coming from replacing the true increment law by its one-step Gaussian approximation. It is of order
    \(
    O(D\Delta),
    \)
    and, as expected, vanishes linearly as the observation grid becomes finer.

    \item \textbf{Term (II): score approximation error.}
    This term measures how accurately the learned denoiser approximates the expected increments. Since $b_\Delta(\tau)=O(1/\Delta)$ for fixed $\tau$, its contribution is of order
    \[
    O\!\left(\frac{\varepsilon_{\mathrm{score}}(\tau)}{\Delta^2}\right).
    \]
    Thus, for fixed noise level, the effect of denoiser error is upscaled as $\Delta\to 0$. This might come as a surprise at first, but it is natural since the drift contribution to the SDE vanishes faster than the noise as $\Delta\to 0$. Note that the nonlinear relation between the estimator performance and the frequency is also observed empirically in Appendix H of \cite{costa2026drift}.

    \item \textbf{Term (III): noise initialization error.}
    This term quantifies the mismatch between the actual forward marginal $p_\tau(\cdot\mid y)$ and the Gaussian initialization $\mathcal{N}(0,I_D)$. For most VP schedules, such as the one used by \cite{costa2026drift}, it vanishes exponentially fast when $\tau\to\infty$ due to the contribution of $\alpha_\tau$ and $1-\sigma_\tau$.

    \item \textbf{Term (IV): Monte Carlo sampling error.}
    This is the variance introduced by approximating the Gaussian expectation with $K$ i.i.d. samples. For fixed $\tau$ and $\Delta$, it decays as
    \(
    O(D/K),
    \) as expected.
\end{itemize}

In summary, decreasing $\Delta$ reduces the discretization error in Term (I) but can amplify the score approximation error in Term (II), and also in Terms (III), (IV), via $b_\Delta(\tau)$, so refining the observation grid only helps if the denoiser improves accordingly. Likewise, increasing $\tau$ suppresses the initialization error in Term (III) but may worsen the score approximation.

Overall, the bound might not be tight, especially because we are ignoring the dependence of $\epsilon_{\text{score}}(\tau)$ on $\Delta$, and the Lipschitz constant of $D_{\theta^*}$ might be large. Nonetheless, it is remarkable that it still captures the nonlinear dependency of the MSE on $\Delta$ observed by \cite{costa2026drift}.

Furthermore, Theorem \ref{theorem:main} also allows to choose the hyperparameters $\varepsilon_{\mathrm{score}}$, $K$ and $\tau$ depending on $\Delta$, in the same fashion as in Corollary 2.7 of \cite{pfarr2026analyzing}:

\begin{corollary}\label{corollary:tuning}
    Assume $\Delta<1$ and let $\alpha\in (0,1]$. Assume moreover that $L_D(\tau)$ remains uniformly bounded along the chosen noise levels. To obtain a bound of rate $\mathcal{O}(\Delta^\alpha)$ in Theorem \ref{theorem:main}, it is enough to choose 
    \begin{equation}
        \tau\asymp\alpha\log(1/\Delta),\quad K\asymp\Delta^{-(2+\alpha)},\quad\varepsilon_{\mathrm{score}}\asymp\Delta^{2+\alpha}.
    \end{equation}
\end{corollary}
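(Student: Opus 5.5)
We plug the prescribed choices into each of the four terms of Theorem \ref{theorem:main} and check that each is $\mathcal{O}(\Delta^\alpha)$. First we record the behaviour of the coefficients as functions of $\tau$ and $\Delta$. For a VP schedule with $\beta\ge\gamma_0>0$ we have $\alpha_\tau\le e^{-\gamma_0\tau/2}$ and $1-\sigma_\tau=\frac{\alpha_\tau^2}{1+\sigma_\tau}\le\alpha_\tau^2$. When $\tau$ is bounded away from zero, $\sigma_\tau^2$ is bounded below, so $|a(\tau)|=\alpha_\tau\sigma^2/\sigma_\tau^2=\mathcal{O}(1)$ and $b_\Delta(\tau)=\frac{1}{\Delta}+\frac{\alpha_\tau^2\sigma^2}{\sigma_\tau^2}=\mathcal{O}(1/\Delta)$. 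Since $L_D(\tau)$ is uniformly bounded along the chosen noise levels, $L_{\hat\mu}(\tau)=\mathcal{O}(1/\Delta)$. We also treat $\frac{1}{T}\int_0^{T+\Delta}\mathfrak{m}_\mu$ and $D,\sigma,L$ as fixed constants, which is legitimate by Assumptions \ref{ass:Lipschitz-drift} and \ref{ass:bounded-second-moment}.

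With these estimates we bound the terms one at a time. Term (I) is $\mathcal{O}(\Delta^2+\Delta)=\mathcal{O}(\Delta)\subseteq\mathcal{O}(\Delta^\alpha)$, since $\Delta<1$ and $\alpha\le1$. Term (II) is $b_\Delta^2\varepsilon_{\mathrm{score}}=\mathcal{O}(\Delta^{-2}\Delta^{2+\alpha})=\mathcal{O}(\Delta^\alpha)$. Term (IV) is $\frac{D}{K}L_{\hat\mu}^2=\mathcal{O}(\Delta^{2+\alpha}\Delta^{-2})=\mathcal{O}(\Delta^\alpha)$.

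Term (III) needs the most care, and it is where the choice of $\tau$ enters. Its first part is $L_{\hat\mu}^2\alpha_\tau^2\mathcal{O}(\Delta)=\mathcal{O}(\Delta^{-1}\alpha_\tau^2)$. Its second part is $L_{\hat\mu}^2(1-\sigma_\tau)^2D=\mathcal{O}(\Delta^{-2}\alpha_\tau^4)$. Taking $\tau=c\,\alpha\log(1/\Delta)$ gives $\alpha_\tau\le\Delta^{c\gamma_0\alpha/2}$. We then choose the implicit constant $c$ in $\tau\asymp\alpha\log(1/\Delta)$ large enough, say $c\gamma_0\ge 3/\alpha+1$, which covers both exponents for every $\alpha\in(0,1]$. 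With this choice $\Delta^{-1}\alpha_\tau^2\le\Delta^{\alpha}$ and $\Delta^{-2}\alpha_\tau^4\le\Delta^{\alpha}$.

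The main obstacle is that $\asymp$ hides constants depending on the schedule ($\gamma_0$) and on $\alpha$. We make this explicit by writing $\tau=C\log(1/\Delta)$ with $C=C(\gamma_0,\alpha)$, which is still of order $\alpha\log(1/\Delta)$ for fixed $\alpha$. This choice also gives $\tau\ge C\log 2>0$ for $\Delta\le1/2$, so $\sigma_\tau$ stays bounded below and the $\mathcal{O}(1/\Delta)$ bound on $b_\Delta$ remains uniform. Finally we sum the four contributions and multiply by the factor $4$ from Theorem \ref{theorem:main}, which gives the overall rate $\mathcal{O}(\Delta^\alpha)$.
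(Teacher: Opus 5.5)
Your proposal is correct and follows the same route as the paper, whose proof only states that the rate follows from substituting the choices into the four terms of Theorem~\ref{theorem:main} under a standard VP schedule. You supply the algebra the paper omits: $b_\Delta(\tau)=\frac{1}{\Delta}+\frac{\alpha_\tau^2\sigma^2}{\sigma_\tau^2}$, $1-\sigma_\tau\le\alpha_\tau^2$, and $L_{\hat\mu}(\tau)=\mathcal{O}(1/\Delta)$ once $\tau$ is bounded away from $0$. You also correctly note that, using only $\beta\ge\gamma_0$, the constant in $\tau\asymp\alpha\log(1/\Delta)$ must be large enough depending on $\gamma_0$ and $\alpha$ (for the paper's linear schedule, the $\tau^2$ term in $\int_0^\tau\beta(s)\,ds$ would make any fixed constant suffice once $\Delta$ is small).
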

\begin{proof}
    The result follows directly from Theorem \ref{theorem:main} after basic algebraic manipulations. Observe that we are implicitly assuming that we work with standard VP schedules such as the one used by \cite{costa2026drift}: $\beta(\tau)=\gamma_0+\tau(\gamma_1-\gamma_0)$, for some $\gamma_1>\gamma_0>0$.
\end{proof}
For example, for a bound of order $\mathcal{O}\left(\sqrt{\Delta}\right)$, we have $\tau\asymp \frac{1}{2}\log(1/\Delta),$ $K\asymp \Delta^{-5/2}$ and $\epsilon_{\text{score}}\asymp \Delta^{5/2}$.
We fixed $\Delta$ in Corollary \ref{corollary:tuning} because the sampling frequency is usually given in many problems, however similar relations could be obtained by fixing any other hyperparameter. 

As suggested before, among the requirements in Corollary \ref{corollary:tuning}, the denoiser accuracy, $\varepsilon_{\mathrm{score}} \asymp \Delta^{2+\alpha},$ is the most stringent. This dependence reflects the $\mathcal{O}(1/\Delta^2)$ amplification factor of $b_\Delta(\tau)^2$ inherent to the estimator. However, the requirement in Corollary \ref{corollary:tuning} is very likely overly pessimistic, because Theorem \ref{theorem:main} treats $\varepsilon_{\mathrm{score}}(\tau)$ as independent of $\Delta$, while in practice finer discretizations increase the training sample size and may yield increments closer to a Gaussian, making the denoising problem easier. Obtaining error bounds for denoiser estimation that take into account the discretization is thus an important avenue for future work.

\section{Conclusions and future work}\label{section:conclusions}

To our knowledge, Theorem \ref{theorem:main} gives the first theoretical guarantees for SDE parameter estimation via conditional diffusion models, in particular, for the drift estimator of \cite{costa2026drift}. The resulting error bound isolates four fundamental sources of error in this approach: discretization bias, denoiser or score-approximation error, noise initialization error and Monte Carlo sampling error.

There are several natural directions for future research. First, our results are based on an oracle assumption on the denoiser or score error. As mentioned above, end-to-end bounds that account for the statistical properties of the underlying neural architecture remain an important open problem. Second, the structure of Theorem \ref{theorem:main} and Corollary \ref{corollary:tuning} suggest that they may be used to obtain practical rules for hyperparameter tuning; we leave this direction for future work.

Furthermore, it would be valuable to extend the present framework beyond the basic setting considered here, and to develop comparable methods and theoretical guarantees for broader families of SDEs, such as equations with time-dependent drifts or unknown diffusion parameter. Finally, it seems possible to extend our theoretical guarantees to SDE sampling methods that also use conditional diffusion models such as \cite{liu2025training, gao2025data}.

\bibliographystyle{tmlr}
\bibliography{main}
\end{document}

%% file: math_commands.tex
\usepackage{amsmath,amsfonts,bm}

\def\eqref#1{equation~\ref{#1}}

\def\1{\bm{1}}

\DeclareMathAlphabet{\mathsfit}{\encodingdefault}{\sfdefault}{m}{sl}
\SetMathAlphabet{\mathsfit}{bold}{\encodingdefault}{\sfdefault}{bx}{n}

\newcommand{\E}{\mathbb{E}}

\newcommand{\R}{\mathbb{R}}

\newcommand{\Var}{\mathrm{Var}}

